\newtheorem{lemma}{Lemma}
\begin{document}

\title{Automatic generation of ground truth for the evaluation of obstacle detection and tracking techniques}

\author{
\authorblockN{Hatem Hajri\authorrefmark{1}, Emmanuel Doucet\authorrefmark{1}\authorrefmark{2}, Marc Revilloud\authorrefmark{1}, Lynda Halit\authorrefmark{1}, Benoit Lusetti\authorrefmark{1},\\  Mohamed-Cherif Rahal\authorrefmark{1}}\\
\authorblockA{\authorrefmark{1}\textit{Automated Driving Research Team}, Institut \textsc{VEDECOM}, Versailles, France}
\authorblockA{\authorrefmark{2}\textit{InnoCoRe Team}, Valeo, Bobigny, France}
}

\maketitle

\begin{abstract}
As automated vehicles are getting closer to becoming a reality, it will become mandatory to be able to characterise the performance of their obstacle detection systems. This validation process requires large amounts of ground-truth data, which is currently generated by manually annotation.
In this paper, we propose a novel methodology to generate ground-truth kinematics datasets for specific objects in real-world scenes. Our procedure requires no annotation whatsoever, human intervention being limited to sensors calibration. We present the recording platform which was exploited to acquire the reference data and a detailed and thorough analytical study of the propagation of errors in our procedure. This allows us to provide detailed precision metrics for each and every data item in our datasets. Finally some visualisations of the acquired data are given.

\end{abstract}

\begin{keywords}
ground truth, obstacle detection, perception, tracking, automated driving, GNSS, LIDAR, sensors calibration
\end{keywords}

\section{Introduction}
Object detection and tracking both play a crucial role in autonomous driving. They are low-level functions upon which many other increasingly high-level functions are built. These functions include Intention prediction, Obstacle avoidance, Navigation and planning. Being depended on by so many functions, the task of obstacle detection and tracking must be performed with a high level of accuracy and be robust to varying environmental conditions. However, the generation of ground truth data to evaluate obstacle detection and tracking methods usually involves manual annotation, either of images, or of LIDAR point clouds \cite{Geiger2012CVPR}\cite{Cordts2016Cityscapes}\cite{8317828}. 

This paper showcases a method which takes advantage of the multiplication of autonomous driving platform prototypes in research structures to generate precise and accurate obstacle ground truth data, without requiring the usual phase of painstaking manual labelling of raw data.

Firstly, the methodology applied to generate this data will be described in general terms, and some specific technical topics such as the sensors time-synchronisation method used to collect data will be presented. Then analysis of errors propagation is performed. Finally some plots of the collected data are given together with potential applications. 

\section{General method description}
The proposed method requires two or more vehicles to generate obstacles dynamics ground truth data. The first vehicle -the ego-vehicle- is equipped with a high-precision positioning system (for example Global Navigation Satellite System (GNSS) with Real Time Kinematics (RTK) corrections coupled with an inertial measurement unit (IMU)), and various environment perception sensors (for example LIDARs, cameras or RADARs). The other vehicles -the target vehicles- only need to be equipped with a high-precision positioning system, similar to that of the ego-vehicle. By simultaneously recording the position and dynamics of all vehicles, it is possible to express the kinematics of all equipped vehicles present in the scene, in the ego-vehicle frame of reference, and therefore to generate reference data for these vehicles. This reference data can then be used to evaluate the performance of obstacle detection methods applied to the environmental sensors data collected on the ego-vehicle.

\section{Data collection setup}
In this section, we will present in details the set of sensors which were available for our data collection, and will detail the method applied to ensure the synchronicity of the recording process taking place in different vehicles. Three vehicles were used during this data collection campaign : the ego-vehicle was a Renault Scenic equipped to acquire environment perception data with a high accuracy. The target vehicles were Renault ZOEs, modified to be used as autonomous driving research platforms, and therefore equipped with precise positioning systems.

\subsection{Ego-vehicle perception sensors}
To record perception data, our ego-vehicle is equipped with two PointGrey 23S6C-C colour cameras, a Velodyne VPL-16 3D laser scanner (16 beams, 10Hz, 100m range, 0.2$^{\circ}$ horizontal resolution), a cocoon of five Ibeo LUX laser scanners (4 beams, 25Hz, 200m range, 0.25$^{\circ}$ horizontal resolution) covering a field of view of 360$^{\circ}$ around the vehicle.

\begin{figure}[ht]
    \centering
    \includegraphics[width = 0.8\linewidth]{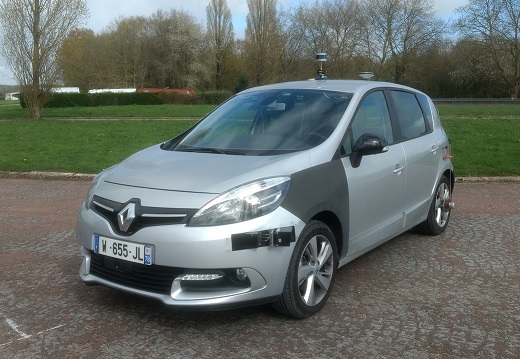}
    \centering
    \caption{The perception vehicle used : two ibeo LUX, the VLP16, GNSS antenna and the precision odometer are visible}
    \label{fig:scenic_img}
\end{figure}

The cameras are positioned inside the car, behind the windscreen with a baseline of approximately 50cm. The Velodyne VLP-16 is positioned on the roof of the vehicle at a position and height that minimise the occlusion of the laser beams by the vehicle body. The Ibeo LUX are all mounted at the same height of approximately 50cm, two on each front wing (one pointing forward, one to the side), and one at the back, pointing backwards (see Figure~\ref{fig:lux_setup} ).

\begin{figure}[ht]
    \centering
    \includegraphics[width = \linewidth]{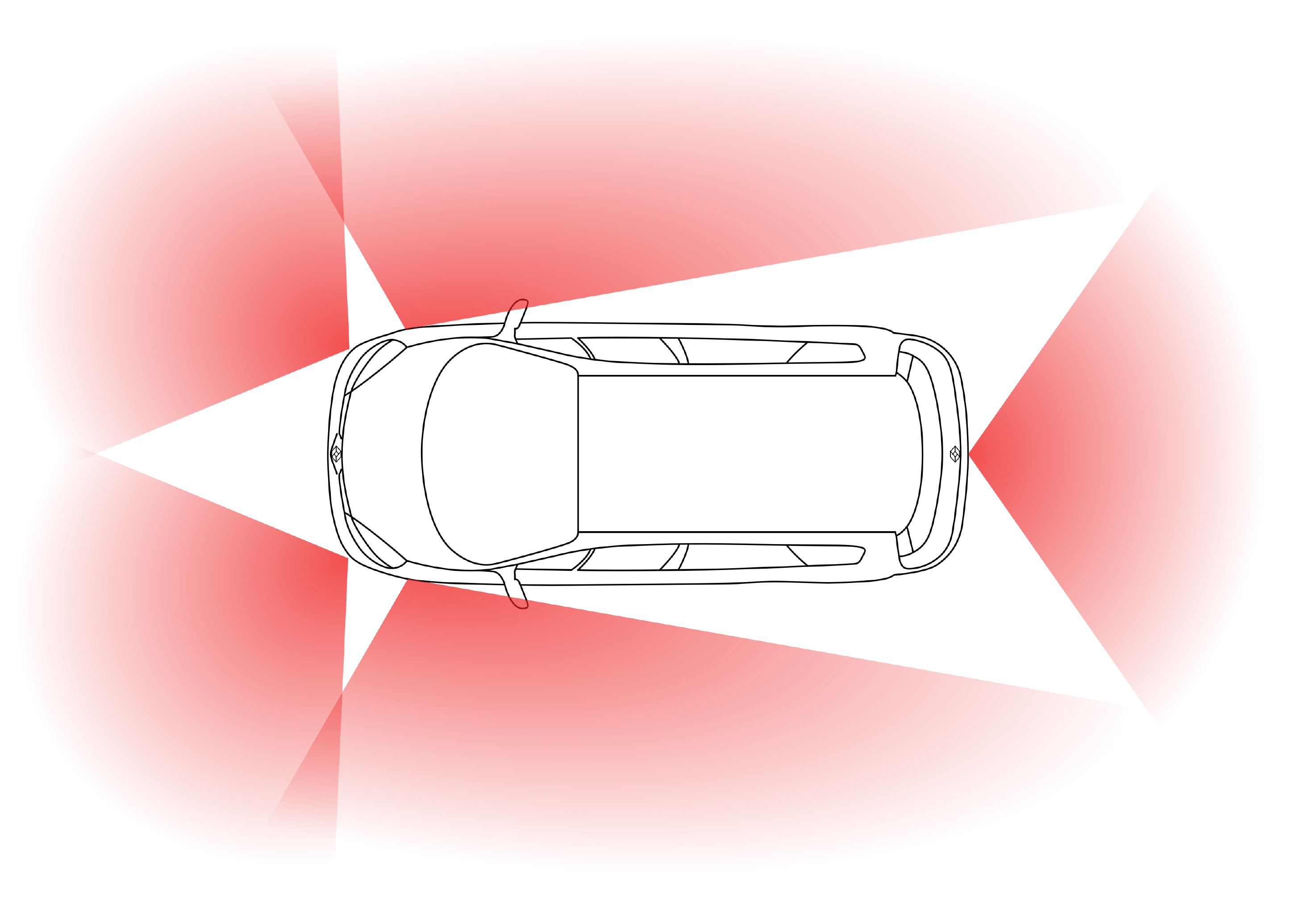}
    \caption{Ibeo LUX cocoon setup}
    \label{fig:lux_setup}
\end{figure}

\subsection{Precise localisation system}
The accuracy of the data generated using our method relies entirely on the accuracy of the vehicles' positioning systems. Therefore, each vehicle was equipped with state of the art positioning sensors : a choke-ring GNSS antenna feeding a GNSS-RTK receiver coupled with a high-grade, fibre optic gyroscopes-based iXblue Inertial Measurement Unit. Additionally, the perception vehicle is equipped with a high-accuracy odometer mounted on the rear-left wheel, while the target vehicles are equipped with a Correvit\textregistered\ high-accuracy, contact-less optical odometer. The data emanating from these sensors is fused using a Kalman Filter-based robust observer which jointly estimates the IMU and external sensors biases. The performance of this system can be further improved in post-processing by employing accurate ephemeris data and smoothing techniques. Table~\ref{tab:pos_perf} provides an overview of the combined performance of our positioning system and of the the aforementioned post-processing.

\begin{table}[ht]
    \renewcommand{\arraystretch}{1.3}
    \caption{Positioning system performance}
    \label{tab:pos_perf}
    \centering
    \begin{tabular}{|m{1.7cm}||>{\centering\arraybackslash}m{1.1cm}|>{\centering\arraybackslash}m{1.1cm}|>{\centering\arraybackslash}m{1.1cm}|>{\centering\arraybackslash}m{1.1cm}|}
        \hline
        & Heading (deg) & Roll/Pitch (deg) & Position X,Y (m) & Position Z (m)\\
         \hline\hline
        Nominal GNSS signal &  0.01 & 0.005 & 0.02 & 0.05\\\hline
        60 sec GNSS outage & 0.01 & 0.005 & 0.10 & 0.07\\\hline
        300 sec GNSS outage & 0.01 & 0.005 & 0.60 & 0.40\\\hline
    \end{tabular}
\end{table}

\subsection{Time synchronisation}
One of the biggest challenges of performing data collection distributed across multiple vehicles is to precisely synchronise the clocks used for time-stamping the data in each platform. This is especially true when acquiring data in high-speed scenarios. Highway scenarios, in which the absolute value of the relative velocity of vehicles may reach 70m/s, require the synchronisation of the vehicle clocks to be at least accurate to the millisecond. This inaccuracy induces an incompressible positioning error, which adds to that of our positioning system (see Subsection~\ref{subsec:sensitivity_pos}).

To achieve such a precise synchronisation, a Network Time Protocol (NTP) server fed with the pulse-per-second (PPS) signal provided by our GNSS receivers was installed in each vehicle to synchronise the on-board computers in charge of recording all the data. Prior to any recording session, the NTP servers and computer clocks were allowed 12 hours to converge to a common time.

\section{Sensors calibration}
Generating ground truth data requires very accurate sensors calibration. Given the difficulty of calibrating LIDAR sensors relatively to cameras\cite{Geiger2012CVPR}, we propose the following calibration pipeline : first, the positioning system is calibrated, then the cameras are calibrated intrinsically, and finally the rigid transformations relating cameras and LIDARs to the vehicle frame are estimated.

\subsection{Positioning system calibration}
The calibration of the positioning system consists in calculating the position and attitude of all positioning sensors (GNSS antenna, optical odometer) in the frame of reference of the IMU. After a phase of initialisation during which the vehicle remains static to allow the estimation of all sensors biases and the convergence of the RTK, the vehicle is manoeuvred. By comparing the motion information emanating from each sensor and comparing it to that of the IMU, one is then able to determine the rigid transformation between the said sensor and the IMU.

\subsection{Cameras calibration}
To estimate the intrinsic parameters of our cameras, and the relative pose of our stereo pair, we sweep the space in front of the cameras at three different depths, making sure to cover the whole field of view of each camera. We then use a mixture of Geiger's checkerboard pattern detector~\cite{Geiger2012ICRA} and of the sub-pixellic corner detector from openCV to extract the corners of the checkerboard. These are then fed to openCV's \texttt{stereoCalibrate} function to jointly estimate the intrinsic parameters of each camera and their relative pose. The set of parameters thus obtained typically yields re-projection errors of less than 0.3 pixel.

The position and orientation of our cameras are obtained in a semi-automatic fashion. Their position in the vehicle is precisely measured with laser tracers, and their orientations are estimated using a known ground pattern by minimising the re-projection error of the said pattern in the images.

\subsection{LIDARs calibration}
\subsubsection{Velodyne VLP-16 calibration}
The objective of the Velodyne to IMU calibration process is to determine the rigid transformation $T_{Vel\rightarrow IMU}$ between the Velodyne reference frame and that of the IMU.
Our Velodyne to IMU calibration process is fully automated. Using Iterative Closest Point, we match the 3D point clouds acquired during the calibration manoeuvre one to another to generate a trajectory. Then, the pose of the IMU is re-sampled to the timestamps of the Velodyne scans, and $T_{Vel\rightarrow IMU}$ is estimated through a non-linear optimisation process applied to 1000 pose samples. This rigid transformation estimate is then refined by repeating the process, using $T_{Vel\rightarrow IMU}$ and the linear and angular velocities of the vehicle to correct the motion-induced distorsion of the Velodyne point clouds. This process usually converges in just one iteration.
\subsubsection{Ibeo LUX}
The calibration of an Ibeo LUX cocoon is slightly more complicated than that of a single Velodyne, as it involves simultaneously calibrating all the sensors. Indeed, calibrating each LUX separately will almost certainly result in a poorly consistent point cloud when aggregating clouds from all sensors. Likewise, precisely calibrating one sensor, and then calibrating all sensors relative to their neighbour will also lead to such poor results. A simple way to ensure the global coherence of the cocoon calibration is to use the point cloud from a calibrated Velodyne as a reference, and to calibrate all Ibeo LUX sensors relative to this spatially coherent reference.

\section{Ground-truth data generation}\label{sec:process}
\subsection{Notations}
Let us call $X_i^k = (x,y,v_x,v_y,\psi)_i^k$ the state of vehicle $i$, with $(x,y)_i^k$ the position of its reference point, $(v_x,v_y)_i^k$ its velocity vector, and $\psi_i^k$ its yaw angle, all expressed in the frame of reference $k$.

In the rest of the paper, $\cdot_e$ and $\cdot_t$ respectively denote state variables of the ego and of the target vehicle, and $\cdot^{UTM}$ and $\cdot^{ego}$ respectively denote a variable expressed in the Universal Transverse Mercator and in the ego-vehicle frame of reference.

\subsection{Processing}
Generating a set of obstacle ground truth data from high accuracy positioning recordings is a two step process :
\begin{itemize}
    \item generate the relative position and dynamics of the obstacles relative to the ego-vehicle,
    \item generate data carrying obstacle semantics from the previously generated data.
\end{itemize}

For each sensor recording, a whole set of ground truth data is generated, so as to provide ground truth synchronised with the sensor data. At each sensor data timestamp, the position and dynamics of each vehicle are estimated from the positioning system recording using temporal splines interpolation.

From this, simple kinematics and velocity composition formulae allow the computation of the relative position and dynamics of the target vehicles in the ego-vehicle frame of reference :

\begin{equation}\label{eq:pos}
     \left[\begin{array}{c} x \\ y \end{array}\right]_t^{ego} = R(-\psi_e^{UTM}) \left[\begin{array}{c} x_t - x_e \\ y_t-y_e \end{array}\right]^{UTM}
\end{equation}

\begin{equation}\label{eq:vel}
    \left[\begin{array}{c} v_x \\ v_y \end{array}\right]_t^{ego} = R(-\psi_e^{UTM}) \left[\begin{array}{c} {v_x}_t - {v_x}_e + \dot{\psi_e}(y_t - y_e) \\ {v_y}_t - {v_y}_e - \dot{\psi_e}(x_t - x_e) \end{array}\right]^{UTM}
\end{equation}

\begin{equation}\label{eq:yaw}
    \psi_t^{ego} = \psi_t^{UTM} - \psi_e^{UTM}
\end{equation}

Where $R(\alpha)$ denotes a rotation in $\text{SO}_2$ of angle $\alpha$

\subsection{Exploitation}
The reference relative positioning data thus obtained can then be used to generate ground truth perception data. One can for example generate the bounding box of the target vehicle in the ego-vehicle frame of reference to evaluate the performance of LIDAR or image-based object detection and tracking algorithms. Another possibility is to use 3D models of the target vehicles and to project them in the camera images to automatically generate a partial image segmentation.

\section{Uncertainty propagation analysis} \label{sec:sensitivity}
In this section, we perform a sensitivity analysis of our ground truth generation process, to characterise the accuracy and precision of the generated data, depending on the performance of our positioning systems, and the clock shift between the vehicles.

The inputs of our generation process are made of position, velocity and heading estimates provided by a GNSS-INS fusion system. These can be modelled as independent, Gaussian random variables\cite{Niu2014}. 

Therefore, we will treat the position, velocity and yaw angle separately, so as to limit the calculation hurdle.

\subsection{Position}\label{subsec:sensitivity_pos}
Equation (\ref{eq:pos}) yields :
$$\left[\begin{array}{c} x \\ y \end{array}\right]_t^{ego} = F(dx^{UTM}, dy^{UTM}, \psi_e^{UTM}) $$
with :
$$F(dx,dy,\psi_e) = \left[\begin{array}{c} dx \cos{\psi_e} + dy \sin{\psi_e} \\ dy\cos{\psi_e} - dx\sin{\psi_e}\end{array}\right].$$
\begin{lemma}\label{techn}
	Let $\Omega$ be a Gaussian random variable such that $\Omega \sim \mathcal{N}(m_\Omega,\sigma_\Omega^2)$.
	Then:
	\begin{align*}
	    \mathbb{E}(\cos(\Omega)) &=\cos(m_\Omega)\ e^{-\sigma_\Omega^2/2}\\
	    \mathbb{E}(\sin(\Omega)) &=\sin(m_\Omega)\ e^{-\sigma_\Omega^2/2}
	\end{align*}
\end{lemma}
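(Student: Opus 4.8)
The plan is to read both formulas as the real and imaginary parts of one complex quantity, the value at $1$ of the characteristic function of $\Omega$. Write $\Omega = m_\Omega + \sigma_\Omega Z$ with $Z \sim \mathcal{N}(0,1)$. Using the angle-addition identities, $\cos(\Omega) = \cos(m_\Omega)\cos(\sigma_\Omega Z) - \sin(m_\Omega)\sin(\sigma_\Omega Z)$ and $\sin(\Omega) = \sin(m_\Omega)\cos(\sigma_\Omega Z) + \cos(m_\Omega)\sin(\sigma_\Omega Z)$. Since the density of $Z$ is even and $z \mapsto \sin(\sigma_\Omega z)$ is odd, $\mathbb{E}(\sin(\sigma_\Omega Z)) = 0$; taking expectations in the two identities therefore reduces everything to computing $c(\sigma_\Omega) := \mathbb{E}(\cos(\sigma_\Omega Z))$, after which $\mathbb{E}(\cos\Omega) = \cos(m_\Omega)\,c(\sigma_\Omega)$ and $\mathbb{E}(\sin\Omega) = \sin(m_\Omega)\,c(\sigma_\Omega)$.

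It then remains to show $c(\sigma) = e^{-\sigma^2/2}$ for $c(\sigma) = \frac{1}{\sqrt{2\pi}}\int_{\mathbb{R}} \cos(\sigma z)\, e^{-z^2/2}\, dz$. I would do this by differentiating under the integral sign, which is legitimate here because the $\sigma$-derivative of the integrand, $-z\sin(\sigma z)e^{-z^2/2}$, is dominated by the integrable function $|z|\,e^{-z^2/2}$ uniformly in $\sigma$. This gives $c'(\sigma) = -\frac{1}{\sqrt{2\pi}}\int_{\mathbb{R}} z\sin(\sigma z)e^{-z^2/2}\,dz$; integrating by parts (using that $-z e^{-z^2/2}$ is the derivative of $e^{-z^2/2}$) turns this into $c'(\sigma) = -\sigma\, c(\sigma)$. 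Solving this linear ODE with the initial condition $c(0) = 1$ yields $c(\sigma) = e^{-\sigma^2/2}$, and substituting back into the two reduced identities finishes the proof.

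Alternatively, one can bypass the ODE by completing the square: $-z^2/2 + i\sigma z = -\tfrac12(z - i\sigma)^2 - \sigma^2/2$, so that formally $\mathbb{E}(e^{i\sigma Z}) = e^{-\sigma^2/2}\cdot \frac{1}{\sqrt{2\pi}}\int_{\mathbb{R}} e^{-(z-i\sigma)^2/2}\,dz = e^{-\sigma^2/2}$, whence $c(\sigma) = \mathrm{Re}\,\mathbb{E}(e^{i\sigma Z}) = e^{-\sigma^2/2}$. The main obstacle in this second route is the only genuinely non-routine point of the whole argument: justifying that the shifted Gaussian integral $\int_{\mathbb{R}} e^{-(z-i\sigma)^2/2}\,dz$ still equals $\sqrt{2\pi}$, which needs a contour-shift argument (Cauchy's theorem for $e^{-w^2/2}$ on a tall thin rectangle, the vertical sides vanishing as their abscissae tend to $\pm\infty$) or, equivalently, an appeal to analytic continuation of the moment generating function. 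Since the differentiation-under-the-integral route sidesteps this subtlety entirely — at the negligible cost of the easy dominated-convergence check above — that is the version I would write up.
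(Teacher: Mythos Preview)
Your proof is correct and rests on the same idea as the paper's: both identify $\mathbb{E}(\cos\Omega)$ and $\mathbb{E}(\sin\Omega)$ as the real and imaginary parts of $\mathbb{E}(e^{i\Omega})$, the characteristic function of $\Omega$ at $1$. The only difference is that the paper simply quotes the closed form $\mathbb{E}(e^{i\Omega}) = e^{im_\Omega - \sigma_\Omega^2/2}$ as known, whereas you go further and supply a self-contained derivation of it (via the ODE $c'(\sigma) = -\sigma c(\sigma)$ after reducing to the standard normal) --- more work than the paper asks for, but entirely sound.
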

\begin{proof}
	Using the explicit expression of the characteristic function of a Gaussian variable, we get:
	\begin{equation*}
	    \mathbb{E}(\cos(\Omega)+i\sin(\Omega))=\mathbb{E}(e^{i\Omega})=e^{i\,m_\Omega-\sigma_\Omega^2/2}
	\end{equation*}
	The real and imaginary part of this expression yield the desired result.
\end{proof}
\vspace{1em}
Under the assumption that $\mathrm{Var}(dx)=\mathrm{Var}(dy)=\sigma_{dx}^2$, and noting $\mathrm{Var}(\psi_e) = \sigma_\psi^2$ : 
$$\mathrm{Cov}(F(dx,dy,\psi_e)) = \left[\begin{matrix}a & c \\ c & b \end{matrix}\right]$$
With :
\begin{align*}
a &= \sigma_{dx}^2+\mathbb{E}(dx)^2\mathrm{Var}(\cos{\psi_e})+\mathbb{E}(dy)^2\mathrm{Var}(\sin{\psi_e})\\
  &- \mathbb{E}(dx)\mathbb{E}(dy)\sin{(2\mathbb{E}(\psi_e))}e^{-\sigma_\psi^2}(1-e^{-\sigma_\psi^2})\\[1em]
b &= \sigma_{dx}^2+\mathbb{E}(dx)^2\mathrm{Var}(\sin{\psi_e})+\mathbb{E}(dy)^2\mathrm{Var}(\cos{\psi_e})\\
  &+ \mathbb{E}(dx)\mathbb{E}(dy)\sin{(2\mathbb{E}(\psi_e))}e^{-\sigma_\psi^2}(1-e^{-\sigma_\psi^2})\\[1em]
c &= \frac{1}{2}\sin{(2\mathbb{E}(\psi_e))}e^{-\sigma_\psi^2}(1-e^{-\sigma_\psi^2})(\mathbb{E}(dx)^2-\mathbb{E}(dy)^2)\\
  &- \mathbb{E}(dx)\mathbb{E}(dy)\cos{(2\mathbb{E}(\psi_e))}e^{-\sigma_\psi^2}(1-e^{-\sigma_\psi^2})
\end{align*}

Therefore, under the assumption that the variance of the position error is similar from one vehicle to another, and along North and East axes ($\sigma_{x}^2 = \sigma_{y}^2 = \sigma_{pos}^2=\frac{1}{2} \sigma_{dx}^2)$), and that the maximal distance between the ego-vehicle and an obstacle is $d_{max}$, we propose the following upper bound for the position error covariance matrix:
\begin{align*}
a &\leq 2\sigma_{pos}^2 + 2 d_{max}^2(1-e^{-\sigma_\psi^2}), \\
b &\leq 2\sigma_{pos}^2 + 2 d_{max}^2(1-e^{-\sigma_\psi^2}),\\
c &\leq \frac{3}{2} d_{max}^2(1-e^{-\sigma_\psi^2/2}).
\end{align*}

\subsection{Velocity}

Equation (\ref{eq:vel}) yields :
$$\left[\begin{array}{c} v_x \\ v_y \end{array}\right]_t^{ego} = G\left((dx, dy,dv_x, dv_y, \psi_e, \dot{\psi_e})^{UTM}\right) $$
with $G(dx, dy,dv_x, dv_y, \psi_e, \dot{\psi_e})$ equal to :

\begin{equation*}
\left[\begin{matrix}
\cos{\psi_e}(dv_x+\dot{\psi_e}dy) + \sin{\psi_e}(dv_y-\dot{\psi_e}dx) \\
\cos{\psi_e}(dv_y-\dot{\psi_e}dx) - \sin{\psi_e}(dv_x+\dot{\psi_e}dy) 
\end{matrix}\right]
\end{equation*}

Using the independence of the input variables, it is possible to express the covariance matrix of $G(dx, dy,dv_x, dv_y, \psi_e, \dot{\psi_e})$ as a function of the first and second order moments of the input variables. Due to space limitations, we only give bounds for the elements of this matrix. Similarly to Section (\ref{subsec:sensitivity_pos}), these bounds tend to $0$ as the variances of the input variables tend to $0$.

\begin{lemma}
	Let $X$ and $Y$ be two independent random variables with means $m_X$ and $m_Y$ and variances $\sigma^2_X$ and $\sigma^2_Y$. Let $\Omega$ be a Gaussian random variable $\mathcal{N}(m_\Omega,\sigma_\Omega^2)$ independent of $X, Y$. Let $Z=\cos(\Omega) X + \sin(\Omega) Y$. Then :
	\begin{equation*}
	    \text{Var}\ (Z)\le \sigma^2_X + \sigma^2_Y + (1- e^{-\sigma_{\Omega}^2}) (|m_X|+|m_Y|)^2
	\end{equation*}
\end{lemma}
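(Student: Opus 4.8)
The plan is to decompose $Z$ into a part whose randomness comes only from $\Omega$ and a centred ``fluctuation'' part, to check these two are uncorrelated, and then to bound each variance separately. Writing $X = m_X + X'$ and $Y = m_Y + Y'$ with $X',Y'$ centred (and still independent of each other and of $\Omega$), set $A = m_X\cos(\Omega) + m_Y\sin(\Omega)$ and $B = X'\cos(\Omega) + Y'\sin(\Omega)$, so that $Z = A+B$ and $\mathrm{Var}(Z) = \mathrm{Var}(A) + \mathrm{Var}(B) + 2\,\mathrm{Cov}(A,B)$.

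First I would dispose of the cross term. Both $\mathbb{E}(B)$ and $\mathbb{E}(AB)$ expand into sums of terms of the form $\mathbb{E}\big(g(\Omega)\,X'\big)$ and $\mathbb{E}\big(g(\Omega)\,Y'\big)$, each of which vanishes because $\Omega$ is independent of $(X',Y')$ and $\mathbb{E}(X') = \mathbb{E}(Y') = 0$; hence $\mathrm{Cov}(A,B) = 0$. For $\mathrm{Var}(B) = \mathbb{E}(B^2)$, expanding the square and using $\mathbb{E}(X'Y') = \mathbb{E}(X')\mathbb{E}(Y') = 0$ gives $\mathrm{Var}(B) = \mathbb{E}(\cos^2\Omega)\,\sigma_X^2 + \mathbb{E}(\sin^2\Omega)\,\sigma_Y^2$, which is at most $\sigma_X^2 + \sigma_Y^2$ since the two coefficients are nonnegative and sum to $1$.

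The core of the proof is the bound on $\mathrm{Var}(A)$. I would rewrite $A = r\cos(\Omega - \phi)$ with $r = \sqrt{m_X^2 + m_Y^2}$ and $\phi$ the phase determined by $r\cos\phi = m_X$, $r\sin\phi = m_Y$ (the case $r=0$ being trivial), so $\mathrm{Var}(A) = r^2\,\mathrm{Var}\big(\cos(\Omega-\phi)\big)$. Since $\Omega-\phi\sim\mathcal{N}(m_\Omega-\phi,\sigma_\Omega^2)$, Lemma~\ref{techn} gives $\mathbb{E}\big(\cos(\Omega-\phi)\big)$, while applying Lemma~\ref{techn} to $2(\Omega-\phi)$ together with the identity $\cos^2 u = \tfrac12(1+\cos 2u)$ gives $\mathbb{E}\big(\cos^2(\Omega-\phi)\big)$. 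Setting $s = e^{-\sigma_\Omega^2}\in(0,1]$ and using $\cos 2u = 2\cos^2 u - 1$, these combine and factor as
$$\mathrm{Var}\big(\cos(\Omega-\phi)\big) = (1-s)\left(\tfrac{1+s}{2} - s\cos^2(m_\Omega-\phi)\right)\ \le\ 1-s\ =\ 1-e^{-\sigma_\Omega^2},$$
the inequality holding since $s\cos^2(m_\Omega-\phi)\ge 0$ and $\tfrac{1+s}{2}\le 1$. Therefore $\mathrm{Var}(A) \le (m_X^2+m_Y^2)(1-e^{-\sigma_\Omega^2})$, and combining with the bound on $\mathrm{Var}(B)$ and the elementary inequality $m_X^2 + m_Y^2 \le (|m_X|+|m_Y|)^2$ yields the claimed bound.

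The only delicate step is the trigonometric-variance identity displayed above: it is where the two invocations of Lemma~\ref{techn}, the double-angle substitution, and the algebra that isolates the factor $1-e^{-\sigma_\Omega^2}$ all have to be handled carefully; the orthogonal decomposition, the vanishing of the cross term, and the bound on $\mathrm{Var}(B)$ are all routine. If one prefers to avoid the phase substitution, an alternative is to expand $\mathrm{Var}(A) = m_X^2\,\mathrm{Var}(\cos\Omega) + 2m_Xm_Y\,\mathrm{Cov}(\cos\Omega,\sin\Omega) + m_Y^2\,\mathrm{Var}(\sin\Omega)$ and to bound each trigonometric variance and covariance by $1-e^{-\sigma_\Omega^2}$ using Lemma~\ref{techn} and Cauchy--Schwarz, which reproduces the stated $(|m_X|+|m_Y|)^2$ form directly.
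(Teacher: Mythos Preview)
Your proof is correct. The orthogonal decomposition $Z=A+B$ and the phase-shift reduction $A=r\cos(\Omega-\phi)$ are a genuinely different route from the paper's. The paper expands $\mathrm{Var}(Z)$ directly, bounds the $\sigma_X^2,\sigma_Y^2$ contribution by dropping the trigonometric weights (your $\mathrm{Var}(B)$ step), and then treats the three ``mean'' terms $m_X^2\,\mathbb{E}(\cos^2\Omega)-e^{-\sigma_\Omega^2}m_X^2\cos^2 m_\Omega$, the analogous $m_Y^2$ term, and the cross term $-m_Xm_Y\sin(2m_\Omega)e^{-\sigma_\Omega^2}(1-e^{-\sigma_\Omega^2})$ separately, each via Lemma~\ref{techn} and the double-angle identity---essentially the alternative you sketch in your last paragraph. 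Your phase substitution collapses those three terms into the single variance $\mathrm{Var}(\cos(\Omega-\phi))$, whose exact factored form $(1-s)\big(\tfrac{1+s}{2}-s\cos^2(m_\Omega-\phi)\big)$ makes the bound $1-e^{-\sigma_\Omega^2}$ immediate; this is cleaner and in fact yields the sharper intermediate inequality $\mathrm{Var}(A)\le (m_X^2+m_Y^2)(1-e^{-\sigma_\Omega^2})$ before you relax to $(|m_X|+|m_Y|)^2$. The paper's term-by-term approach is slightly more pedestrian but avoids introducing the auxiliary phase $\phi$.
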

\vspace{1em}
\begin{proof}
Using Lemma \ref{techn}, we have the following bound:
\begin{align*}
\mathrm{Var}(Z) &\leq \sigma^2_X + \sigma^2_Y + m_X^2 \mathbb{E} (\cos^2(\Omega)) +  m_Y^2 \mathbb{E} (\sin^2(\Omega))\nonumber\\
&- e^{-\sigma_{\Omega}^2}(m_X^2 \cos^2(m_\Omega) +  m_Y^2 \sin^2(m_\Omega))\\
&- m_X m_Y \sin(2 m_{\Omega}) e^{-\sigma_{\Omega}^2}(1-e^{-\sigma_{\Omega}^2})
\end{align*}
Using the identity $\cos^2(\Omega)=\frac{1+2 \cos(\Omega)}{2}$ and again Lemma~\ref{techn}, we get	:
\begin{equation*}
    m_X^2 \mathbb{E} (\cos^2(\Omega))  - e^{-\sigma_{\Omega}^2} m_X^2 \cos^2(m_\Omega) \le m_X^2 (1- e^{-\sigma_{\Omega}^2})
\end{equation*}

The same identity holds with $X$ replaced with $Y$ and $\cos$ replaced with $\sin$. This gives the desired bound.
\end{proof}
\vspace{1em}
Let us denote the mean and variance of a Gaussian variable $Z$ respectively by $m_Z$ and $\sigma^2_Z$. Under the assumption that $\mathrm{Var}(dv_x) = \mathrm{Var}(dv_y) =2\sigma_{vel}^2$ and $\mathrm{Var}(dx)=\mathrm{Var}(dy)=\sigma_{dx}^2$, we have :
\begin{align*}
    \mathrm{Var}(dv_x+\dot{\psi}dy) &= 2\sigma_{vel}^2 + \sigma_{dx}^2\sigma_{\dot{\psi}}^2+m_{\dot{\psi}}^2 \sigma_{dx}^2+m_{dy}^2\sigma_{\dot{\psi}}^2 \\[0.5em]
    \mathrm{Var}(dv_y-\dot{\psi}dx) &=\mathrm{Var}(dv_x+\dot{\psi}dy) 
\end{align*}

Let $\left[\begin{matrix}a & c \\ c & b \end{matrix}\right]$ be the covariance matrix of $\left[\begin{array}{c} v_x \\ v_y \end{array}\right]_t^{ego}$. Using the previous results, we have the following upper bounds for $a, b$ and $c$ :
\begin{align*}
a,b &\leq 4(\sigma_{vel}^2 + \sigma_{pos}^2\sigma_{\dot{\psi}}^2 + \dot{\psi}_{max}^2 \sigma_{pos}^2) + 2d_{max}^2\sigma_{\dot{\psi}}^2,\nonumber\\
&+4(1- e^{-\sigma_{\psi}^2}) (v_{max}+d_{max}\dot{\psi}_{max})^2\\[1em]
c &\leq \sqrt{a^2} \sqrt{b^2},
\end{align*}
where $v_{max}$ is an upper bound for $dv_x$ and $dv_y$ and $\dot{\psi}_{max}$ is an upper bound for $\dot{\psi}$.
\subsection{Yaw angle}

The variance of the relative yaw estimate $\psi_t^{ego}$ is trivially derived from equation (\ref{eq:yaw}). Considering that the heading error covariance is similar in the target and ego-vehicle ($\sigma_{\psi_t^{UTM}}^2 = \sigma_{\psi_e^{UTM}}^2 = \sigma_\psi^2$), we get:
\begin{equation*}
\mathrm{Var}(\psi_t^{ego}) = 2\sigma_\psi^2
\end{equation*}

\subsection{Results}
Using the analytical upper bounds for the covariance matrices of the generated ground truth position, speed and yaw angle data calculated in the previous sections, we can estimate the precision of the data generated using the process described in section~\ref{sec:process}.

The following values will be used : 
\begin{table}[h]
    \renewcommand{\arraystretch}{1.3}
    \centering
    \begin{tabular}{|l|r l|}
        \hline
        $\sigma_{pos}$ & 0.02 & m\\\hline
        $\sigma_{vel}$ & 0.02 & m.s$^{-1}$\\\hline
        $\sigma_\psi$ & $1.75 . 10^{-3}$ & rad\\\hline
        $d_{max}$ & 50 & m\\\hline
        $v_{max}$ & 36 & m.s$^{-1}$\\\hline
        $\dot{\psi}_{max}$ & 1 & rad.s$^{-1}$\\\hline
    \end{tabular}
    \label{tab:est_values}
\end{table}

They are representative of the performance of our positioning system, of the distance at which obstacles become truly observable, and of the maximal dynamics of the vehicle in typical use cases.

These values, when injected in the upper bounds of the position and velocity covariance matrices yield the following results :
\begin{align*}
    \|\mathrm{Cov}((x,y)_t^{ego})\|_F^{1/2} &\leq 0.12\ m\\
    \|\mathrm{Cov}((v_x,v_y)_t^{ego})\|_F^{1/2} &\leq 0.30\ m.s^{-1}
\end{align*}

These values represent the root mean square error on the position and velocity information yielded by our ground truth generation process. We stress the fact that they are obtained by taking extreme values for all variables describing the dynamics of both vehicles, which can never actually be encountered simultaneously in real life situations. 
\section{Data visualisation}
In this section some plots of acquired data are given. These data were collected during a tracking scenario on a track located in Versailles France. The track has approximative length of 3.2 km and is represented in Figure 3 together with the starting and arriving points of the vehicles.
\begin{figure}[h!]
	\begin{center}
		\includegraphics[width=8.1cm]{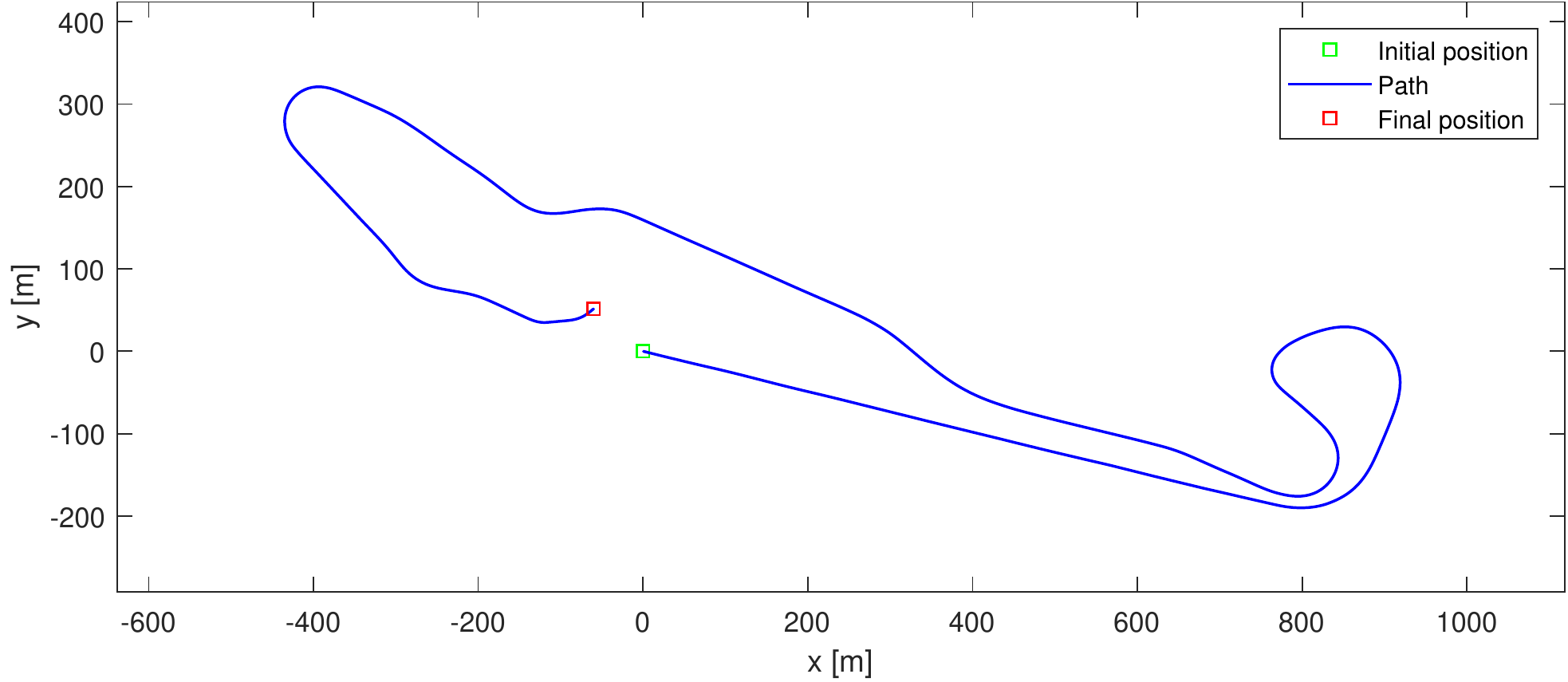} 
		\caption{Test track}
		\label{Satory test track}
	\end{center}
\end{figure}

The following figures show temporal variations of the relative positions $x, y$, relative velocities $vx, vy$ and orientation $\psi$ sent by Lidar of the tracked vehicle (in the ego vehicle frame) and the corresponding ground truth plots. All these quantities are in the  International System of Units. 

\begin{figure}[h!]
	\begin{center}
		\includegraphics[width=8.1cm]{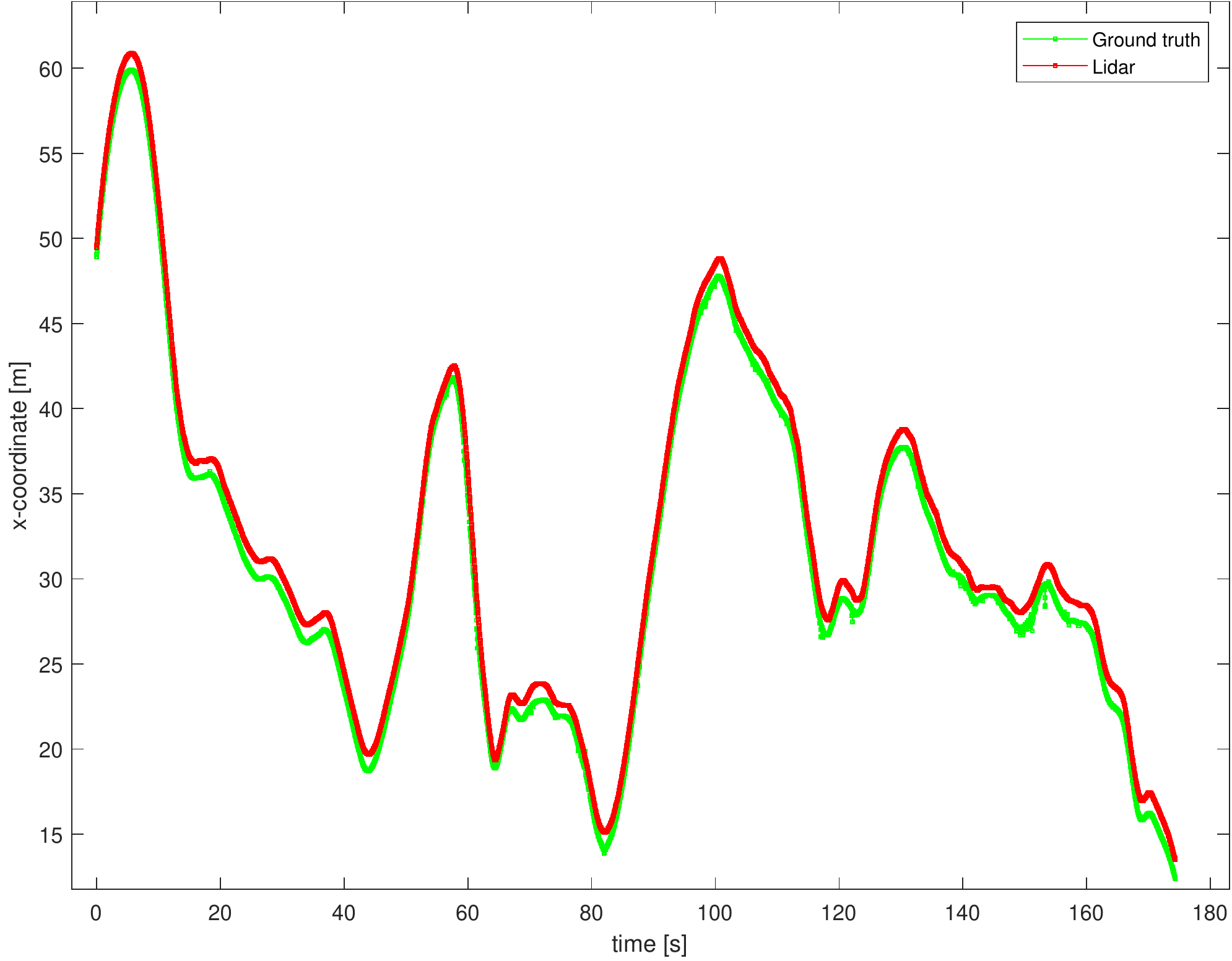} 
		\caption{Variations of $x$}
		\label{Satory test track}
	\end{center}
\end{figure}

\begin{figure}[h!]
	\begin{center}
		\includegraphics[width=8.1cm]{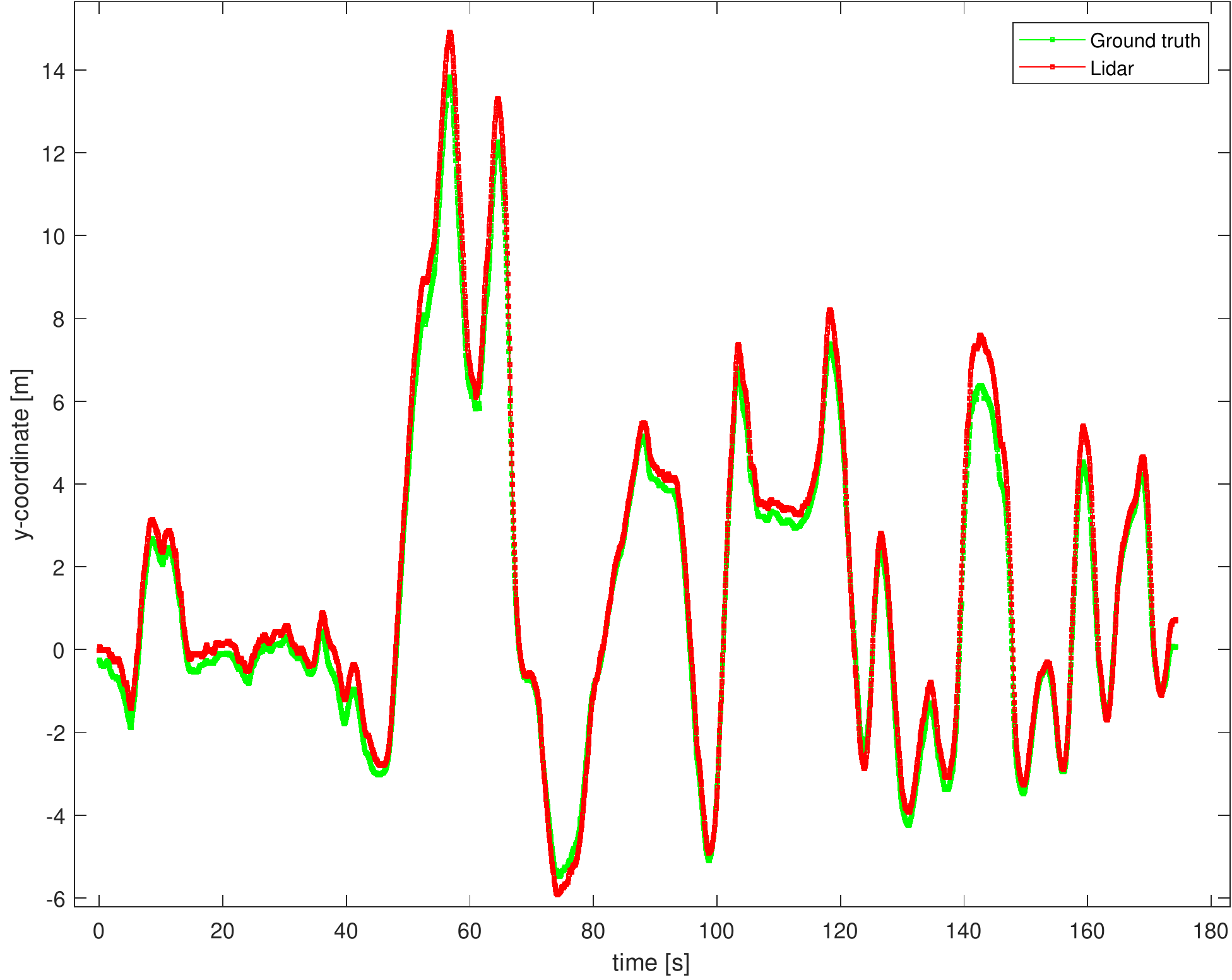} 
		\caption{Variations of $y$}
		\label{Satory test track}
	\end{center}
\end{figure}

\begin{figure}[h!]
	\begin{center}
		\includegraphics[width=8.1cm]{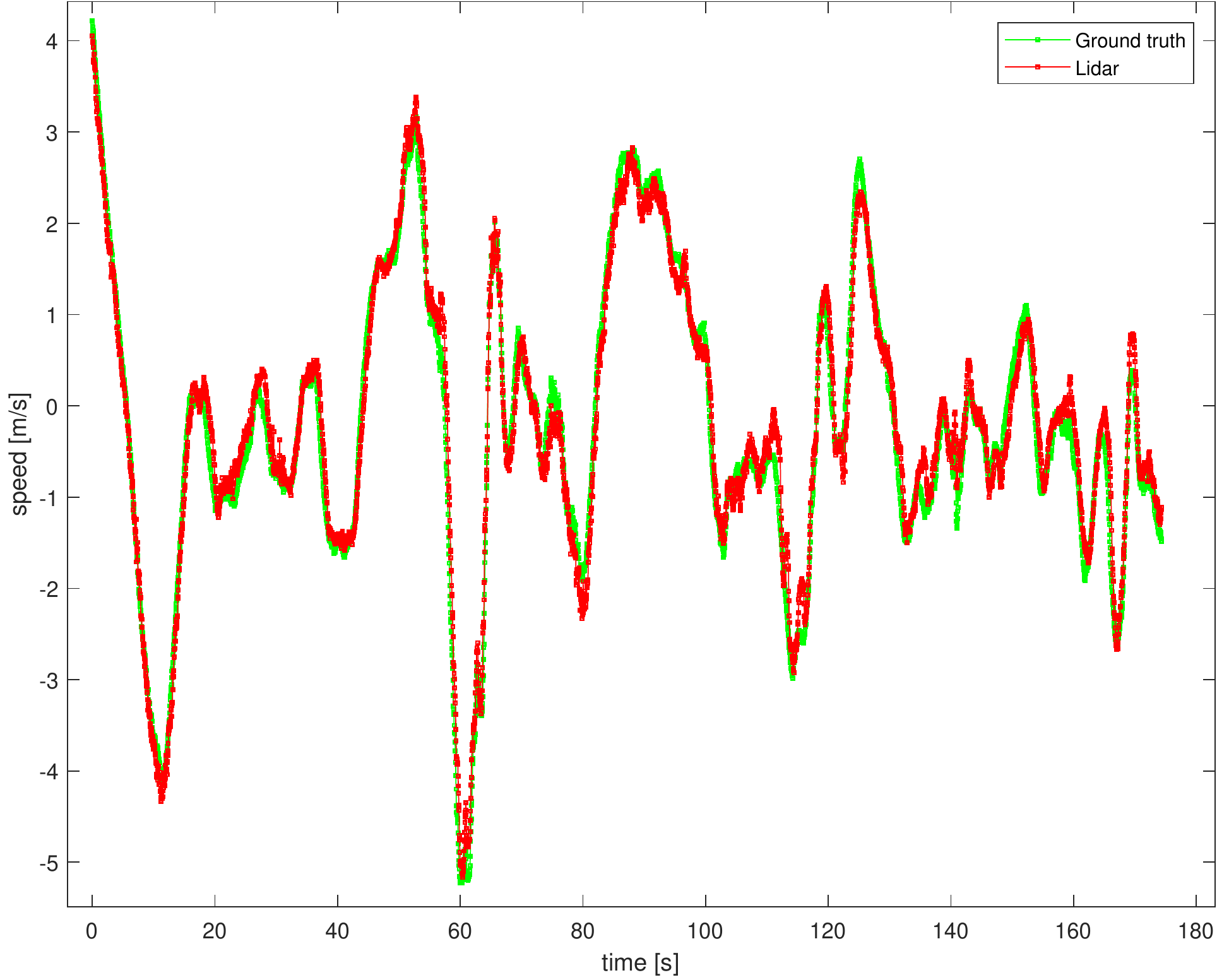} 
		\caption{Variations of $vx$}
		\label{Satory test track}
	\end{center}
\end{figure}

\begin{figure}[h!]
	\begin{center}
		\includegraphics[width=8.1cm]{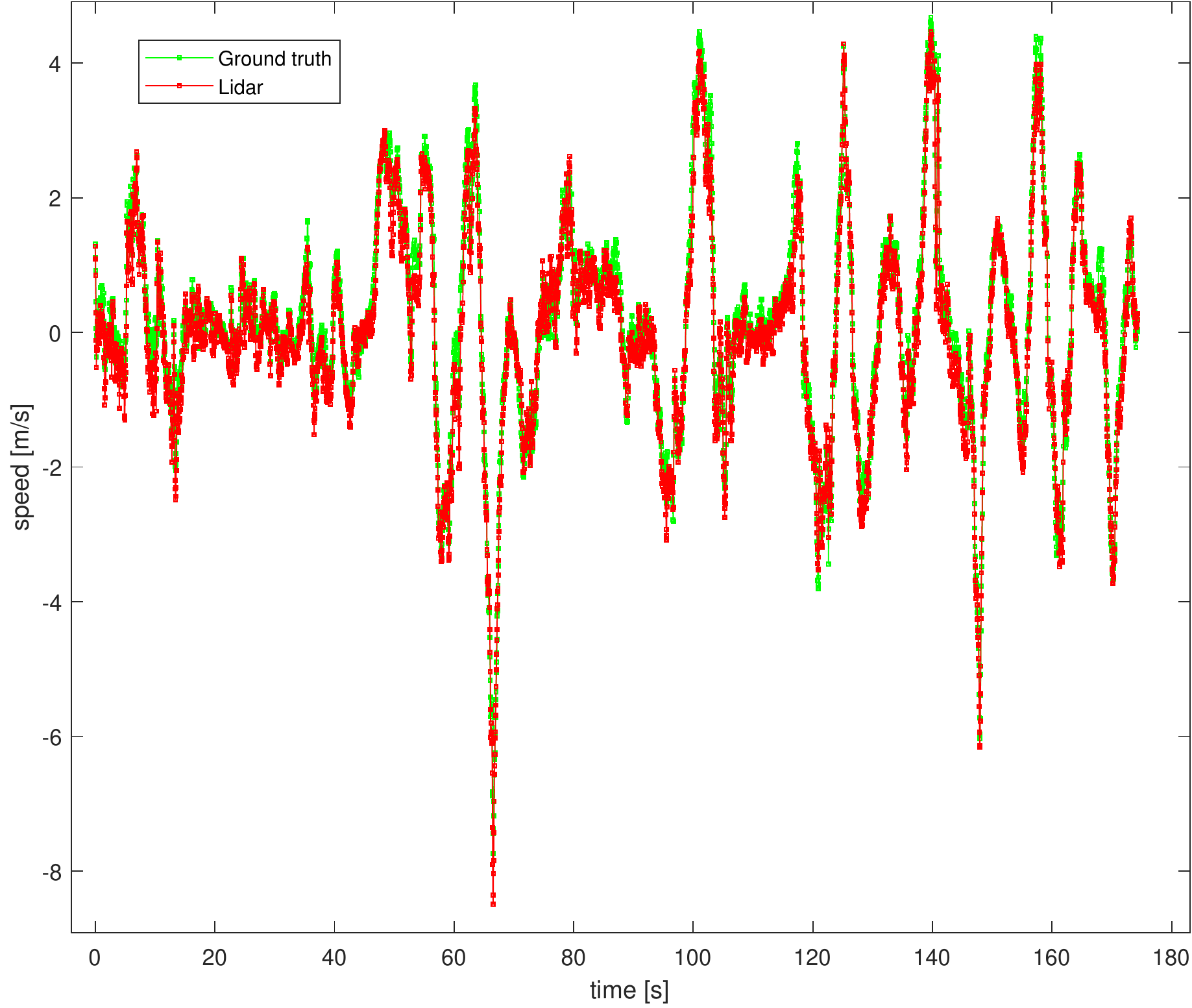} 
		\caption{Variations of $vy$}
		\label{Satory test track}
	\end{center}
\end{figure}

\begin{figure}[h!]
	\begin{center}
		\includegraphics[width=8.1cm]{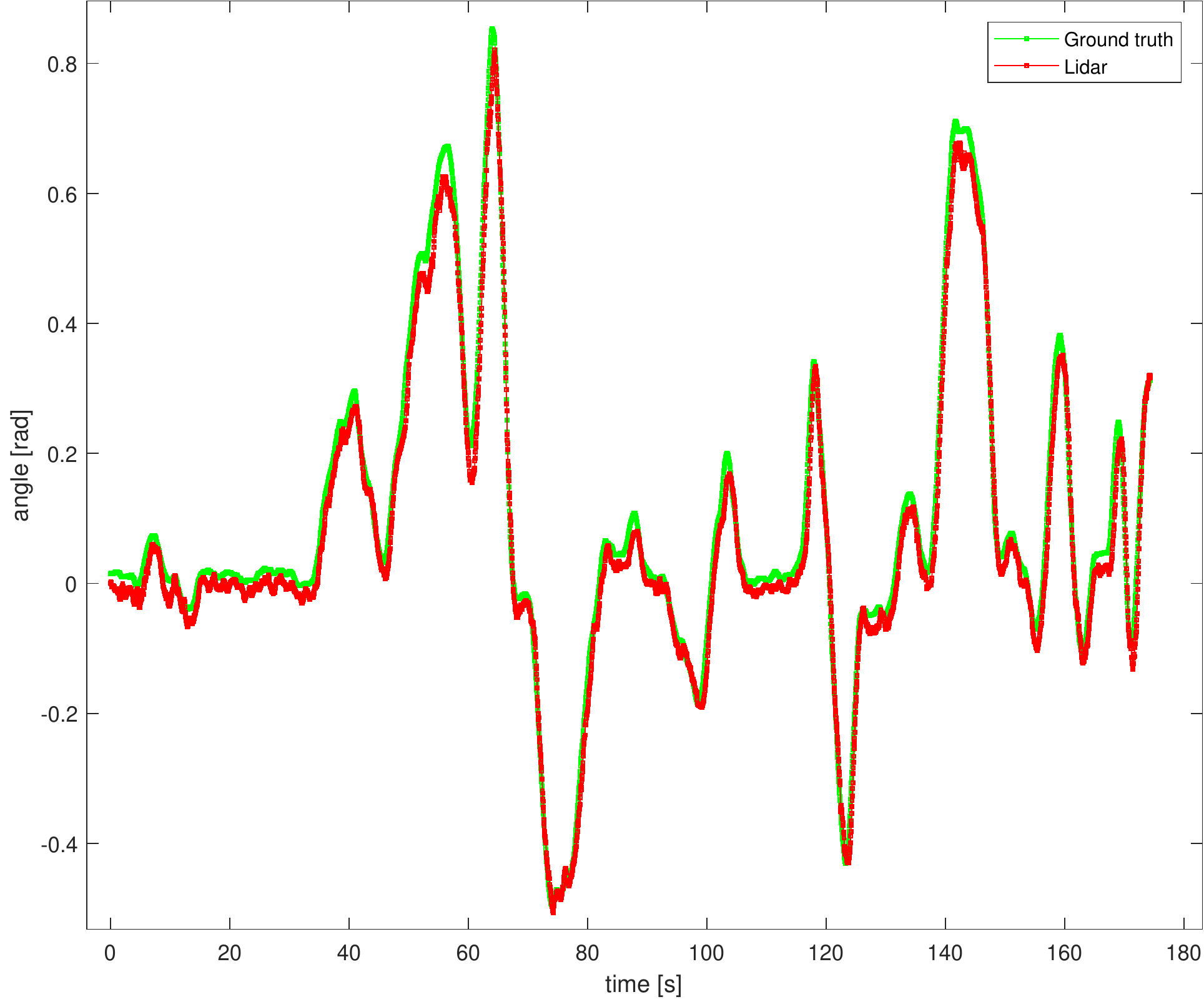} 
		\caption{Variations of $\psi$}
		\label{Satory test track}
	\end{center}
\end{figure}
\newpage
\section{Application to the evaluation of perception algorithms}
This section gives some potential applications of ground truth data to the conception and evaluation of perception algorithms.
\subsection{Clustering Lidar raw data:} Robust obstacle detection using sensors such as Lidar is a key point for the development of autonomous vehicles. Lidar performance mainly depends on its low level processing (the clustering methods used for its raw data, estimations of the bounding boxes, velocities of the representative points of the clusters, etc.). The methodology presented in the paper can be used to evaluate the performance of the main outputs of a Lidar raw data clustering algorithm.  
 
\subsection{Lidar-Radar-Camera Fusion:} Lidars, Radars and Cameras are complementary sensors. Lidars are very accurate on obstacles positions and less accurate on their velocities. On the other hand, Radars are more precise on obstacles velocities and less precise on their positions. Cameras provide images and can be used to perform classification tasks. Fusion between these sensors aims at combining the advantages of each sensor to provide permanent and more robust data (see for example \cite{HLBM,Blanctrackto,ttt,DBLP:journals/tits/GarciaA16,AM}). In particular, the recent work \cite{HLBM} proposes a Lidar-Radar fusion algorithm with evaluation on the database generated in the present paper.
  
\subsection{Dynamic models and prediction of obstacles motions: } Robust tracking of obstacles detected by sensors such as Lidars, Radars and Cameras is crucial for a good functioning of autonomous cars. 
Kalman filters are among the most used methods to track obstacles. These methods are usually coupled to preset models such as constant velocity, acceleration or curvature. Ground truth allows to learn true dynamic models of obstacles using statistics, neural networks etc. The learnt models are to be compared with the preset ones.

\section{Summary and future developments}
In this paper, we have presented a method for automatically generating sets of ground truth data to support advances in the field of obstacle detection and tracking. We hope this methodology will help contributors of this area of research to challenge their approaches, and contribute to the development of robust and reliable algorithms. 
In the future, we intend to propose a complete benchmark to unify the usage of this dataset and the performance estimation of obstacle detection and tracking techniques. In addition, we plan to apply statistical and deep learning approaches to the generated ground truth in order to correct sensor measurements, learn motion models etc.

\bibliographystyle{IEEEtran}
\bibliography{IEEEabrv,biblio}

\end{document}